\newtheorem{thm}{Theorem}
\pgfplotsset{compat=1.7}
\title[Linear Regression with Communication Guarantees]{Linear Regression over Networks with Communication Guarantees}
\author{%
 \Name{Konstantinos Gatsis} \Email{konstantinos.gatsis@eng.ox.ac.uk}\\
 \addr Department of Engineering Science,
 University of Oxford, Parks Road, Oxford, OX1 3PJ
}
\begin{document}

\maketitle

\begin{abstract}%
 {A key functionality of emerging connected autonomous systems such as smart cities, smart transportation systems, and the industrial Internet-of-Things, is the ability to process and learn from data collected at different physical locations. This is increasingly attracting attention under the terms of distributed learning and federated learning. However, in connected autonomous systems, data transfer takes place over communication networks with often limited resources. This paper examines algorithms for communication-efficient learning for linear regression tasks by exploiting the informativeness of the data. The developed algorithms enable a tradeoff between communication and learning with theoretical performance guarantees and efficient practical implementations.}%
\end{abstract}

\begin{keywords}%
  Distributed Learning; Federated Learning; Learning over Networks%
\end{keywords}

%



\section{Introduction}

Conventional machine learning approaches require data to be collected at a centralized location to be trained in a centralized manner. However, the emergence of new cyber-physical architectures that are distributed requires rethinking this approach. 
Examples of distributed cyber-physical architectures include the Industrial Internet-of-Things with sensors/actuators/robots connected to access points collecting data to jointly update system models and application operating conditions -- see, for example, Fig.~\ref{fig:architecture}; or future transportation systems with connected vehicles collecting and communicating observations from the road; or large-scale sensing infrastructures in future Smart Cities. As a result, in distributed cyber-physical architectures there is a need to enable learning when data are collected by agents across different physical locations.
%
%

A concept relevant to address this need is federated learning, initiated by Google \citep{konevcny2016federated, bonawitz2019towards}, enabling multiple users to jointly solve a machine learning problem over a communication network from data collected from the users. A major challenge in federated learning is that data can be high-dimensional, making their communication costly and inefficient. To alleviate this communication bottleneck, one direction is based on communicating the machine learning model parameters as they are being trained, such as the weights of a Deep Neural Network, instead of the data itself, or communicate the gradients of the objective with respect to the parameters. In deep learning models with high dimensional weights, sparsification and quantization of the weights or the gradients is further introduced to limit the communication cost~\citep{konevcny2016federated, aji2017sparse, sattler2019sparse, lin2020achieving}. Lazy updates are introduced in \citep{chen2018lag, chen2018communication}, and combinations of non-periodic updates and quantization is explored in \citep{reisizadeh2019fedpaq}.
%
%
Furthermore, when distributed learning is taking place over a wireless network, there is an interest in allocating the available network resources efficiently among the users holding the data~\citep{gunduz2019machine}, such as power~\citep{chen2019joint} or rates~\citep{chang2020communication}. 
The problem of scheduling gradient updates over multiple access channels has also received initial consideration, for example comparing time-based approaches with approaches based on channel conditions~\citep{yang2019scheduling}, or including gradient information~\citep{amiri2020update, chen2020convergence}.


The present paper hinges on the idea that when  model parameters are updated from noisy data, then not all updates are equally informative. Performing updates selectively can be beneficial, and we can evaluate the informativeness of the data by estimating the obtained gain in machine learning performance. Building upon this intuition the proposed algorithms aim for agents to update the machine learning task when their data are most informative, i.e., bring about the most gain. 
By prioritizing updates  with more relevant information, agents can efficiently use communication resources and progress the learning task. 
{This approach builds on recent work by the author, where centralized scheduling of multiple machine learning tasks was explored~\citep{ACC21_Gatsis}, while the present paper addresses the more challenging setup of decentralized communication schemes where agents decide to update independently.}
The technical methodology borrows ideas from the problem of scheduling control tasks over shared communication networks \citep{EisenEtal19a,GatsisEtal15, ayan2019age, soleymani2016optimal}. The methodology is also related to event-triggered learning that tries to update only if necessary~\citep{solowjow2018event, zhao2020event}.


The methodology is developed for the task of solving linear regression problems in Section~\ref{sec:setup} and the communication efficient learning problem is introduced.
Section~\ref{sec:approach} introduces the proposed communication algorithm which prioritizes updates whose data carries the most information, i.e., that would lead to the highest performance gain. 
The approach is theoretically analyzed and  guarantees are provided about both convergence and required communication resources. {Importantly, the proposed approach allows to \textit{provably tradeoff learning performance with communication efficiency}.}
Furthermore, as the method is developed ideally when the data distribution is known, special effort is placed on a practical communication algorithm that uses only the currently available data to estimate how informative the current update will be.
Numerical evaluations in Section~\ref{sec:numerical} validate the performance of the proposed algorithm and show significant performance improvements compared to other approaches in the literature that treat the magnitude of the gradients as a measure of the informativeness of the current update.

\section{Problem Setup}\label{sec:setup}


{
	
	The architecture examined in this paper, shown in Fig.~\ref{fig:architecture}, involves a single access-point/server interested in building a data-driven model by solving a machine learning task on data that are collected by multiple agents. 
	The goal is to find a vector of weights (parameters) $w$ of appropriate dimensions to minimize a performance metric (cost)  $J(w)$.
	The aim will be to achieve this with communication efficiency. 
	This is for example the case when an agent should not communicate all the time over a communication network to update the vector of parameters at the access point/server, e.g., due to capacity constraints. 

}

\begin{figure}[t!]
	{\resizebox{0.5\columnwidth}{!}{\input{multiple_tasks_2}}	}
	\includegraphics[width=0.5\columnwidth]{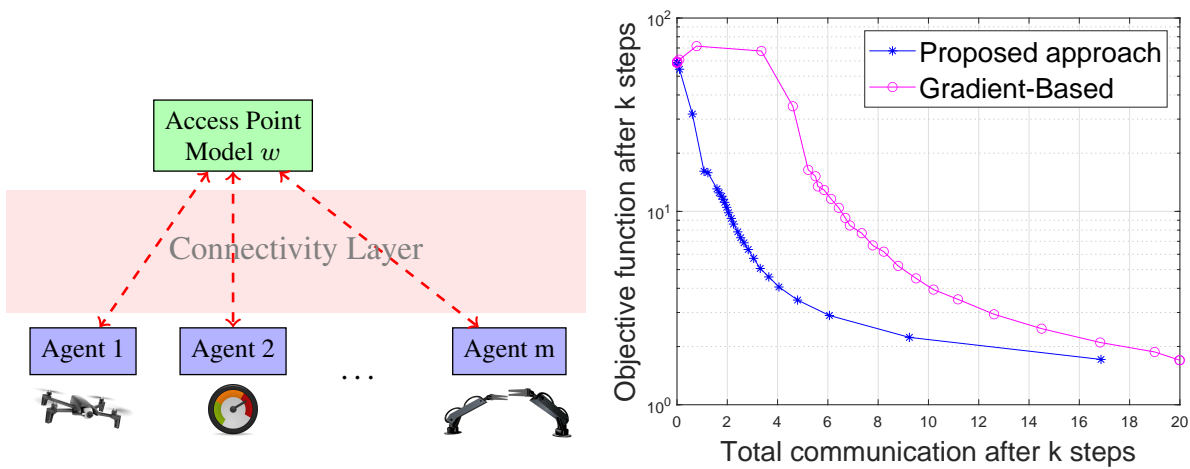}
	\caption{(Left) Architecture for solving machine learning tasks over networks of agents. Agents are collecting data and are communicating with an access point/server. 
		Examples include Industrial IoT systems with sensors/actuators/robots connected to common access points, collecting data to jointly update system models and application operating conditions.  
		{(Right) Comparison between our communication efficient learning approach based on estimating the gain in \eqref{eq:approximate_gain_1} versus the approach in \eqref{eq:scheduling_gradient} based on the magnitude of the gradients. 
		}
	}
	\label{fig:architecture}
	\vspace{-20pt}
\end{figure}


Specifically we consider the machine learning task to be a linear regression problem~\cite[Ch. 9]{shalev2014understanding}. 
We are interested in finding a vector of weights $w$ that explains the relationship between random variables $(x,y)\in \reals^n \times \reals$, i.e., $y \approx x^T w$. The random variables $(x,y)$ follow in general a joint distribution denoted by $\mu$. The desired choice for the weights is the one that minimizes the expected square prediction error, i.e.,
\begin{equation}\label{eq:regression_problem}
\min_{w} J(w) = \frac{1}{2} \mathbb{E}_{(x,y)\sim \mu}(y - x^T w)^2 
\end{equation}
where the expectation is with respect to the data distribution $\mu$ -- in the sequel we drop this notation when it is implied that expectation is with respect to this distribution.

The optimal solution $w^*$ is given as the solution to the linear equations 
\begin{equation}
\mathbb{E}xx^T w^* - \mathbb{E}xy = 0. 
\end{equation}
Towards finding an optimal set of weights, we would like to employ a gradient descent algorithm. Starting from some initial set of weights $w_0$ we would like to update the weights according to
\begin{equation}
w_{k+1} = w_k - \epsilon \nabla J(w_k)  
\end{equation}
where $\nabla J(w_k) = \mathbb{E}xx^T w_k - \mathbb{E}xy$, and $\epsilon >0$ is a small positive stepsize. As will be illustrated later, choosing $\epsilon<2/\lambda_{\max}(\mathbb{E}xx^T)$ guarantees convergence.

The distribution of the data is not a priori known, and hence as is common in machine leaning, e.g., in empirical risk minimization~\cite[Ch. 2]{shalev2014understanding}, we will attempt to minimize the empirical cost computed as an average over collected data. {Specifically we assume that at each iteration $k$ there are $N$ new data points 
	of the form 
	\begin{equation}\label{eq:data}
	(x_i, y_i) \in \reals^n \times \reals, \qquad i =1, \ldots, N.
	\end{equation} 
	We assume each data pair is independent and identically distributed according to a distribution $\mu$.\footnote{This setup arises either when an agent in Figure~\ref{fig:architecture} collects N new independent samples at each iteration, or when it just maintains a large pool of samples and selects randomly $N$ from them at each iteration as frequently done in stochastic gradient descent practice.}} 
Then we form the empirical cost
\begin{equation}
\hat{J}(w) = \frac{1}{2} \frac{1}{N} \sum_{i=1}^N (y_i - x_i^T w)^2
\end{equation}
%
%
With this approximation, 
we follow a stochastic gradient vector 
\begin{equation}
w_{k+1} = w_k  - \epsilon g_k
\end{equation}
computed over the data as
\begin{equation}\label{eq:gradient_estimate}
g_k= \nabla\hat{J}(w_k) = \frac{1}{N}\sum_{i=1}^N \left( x_i x_i^T w_k - x_i y_i\right)
\end{equation}
After this update the prediction error becomes
\begin{equation}
J(w_{k+1}) = \frac{1}{2} \mathbb{E}(y - x^T w_{k+1})^2
\end{equation}
where the expectation is with respect to the distribution $\mu$. We note that since the $N$ data points are random, so is the constructed gradient direction $g_k$, the updated vector $w_{k+1}$, as well as the performance metric $J(w_{k+1})$. 
{To evaluate how good is this updated prediction error, we would like to measure on average the quantity
	\begin{equation}
	\mathbb{E} [J(w_{k+1}) | w_k] = 
	\mathbb{E}_{data \sim \mu^N} [J(w_{k+1}) | w_k]
	\end{equation} 
	It is important to note here that the expectation is over the $N$ i.i.d. data that are collected at iteration $k$ and used to construct the stochastic gradient $g_k$. In the paper, whenever an expectation over iterates $w_k$ is taken, this is an expectation over the data collected until time $k$.}

\subsection{Communication-efficient learning problem}

Given the above modeling for a machine learning task that needs to be solved, the communication problem is as follows. {At each iteration $k$, the server broadcasts the current weights $w_k$ to all agents.} Then each agent $i$ collects $N$ local data points identically distributed (across time and across agents), computes a local stochastic gradient $g_k^i$ from the available local data, and decides whether to transmit this gradient update over the communication network to the receiving server. The server maintains a current vector of weights $w_k$ which will be updated depending on the information received from different agents. For simplicity of exposition the case of two agents is considered, leading to the update rule at the server
\begin{equation}\label{eq:dynamics_single_task}
w_{k+1} = \left\{ \begin{array}{ll} w_k - \epsilon g_k^1 &\text{if agent $1$ transmits}\\
w_k - \epsilon g_k^2 &\text{if agent $2$ transmits}\\
w_k - \epsilon/2 ( g_k^1 +g_k^2) &\text{if both agents transmit}\\
w_k &\text{if no agent transmits} \end{array}\right.
\end{equation}
We also denote with $\alpha_k^i \in \{1,0\}$ the decision for each agent $i$ to transmit or not. 
{At the next iteration $k+1$ a new set of data is collected as in \eqref{eq:data} at each agent, a new stochastic gradient direction $g_{k+1}^i$ is computed at each agent, and the process repeats. The aim will be to\textit{ avoid sending updates all the time in order to limit the communication burden}.}

\begin{remark}[Scope of the setup]
	The setup (linear regression, two agents) is chosen as a basis for theoretical joint analysis of convergence and communication utilization, illustrating inefficiencies of approaches in the literature (Remark~\ref{rem:comparisons}). Besides, linear regression forms the basis for relevant problems in the control systems and learning community, and extensions to general convex problems and more agents is under investigation. 
\end{remark}

\section{Proposed communication-efficient learning}\label{sec:approach}

{ The approach is based on the notion of performance gain which can be thought as a measure of how informative are the data collected at each agent at each time step with respect to the machine learning problem. The gain at agent $i=1,2$ can be calculated by
	measuring how much will the objective change if the agent sends the update
	Whether this gain is negative or positive depends on the random direction of the update. 
	The proposed approach then is to send a gradient update if the gain is large enough. Mathematically we write
	\begin{equation}\label{eq:single_scheduling}
	\alpha_k^i = \left\{ \begin{array}{ll} 1 &\text{if } J(w_k - \epsilon g_k^i) - J(w_k) \leq -\lambda \\
	0 &\text{otherwise}\end{array}\right.
	\end{equation}
	for some scalar parameter $\lambda>0$.
	Intuitively this approach saves up communication resources, because the larger the parameter value $\lambda$ is, the more infrequent the updates will be. But then the question is what can be said about the progress of learning. We have then the following result.
	
	\begin{thm}[Convergence]\label{thm:theorem_single}
		Consider the optimization problem defined in \eqref{eq:regression_problem} and let $w^*$ be the optimal solution. Consider the update rule in \eqref{eq:dynamics_single_task}. Suppose $g_k^i, i=1,2,$ are independent random variables with mean equal to $\nabla J(w_k)$ and covariance $G$ at each iteration $k$. Consider the communication strategy in \eqref{eq:single_scheduling}. Then for any iteration $N$ we have that 
		\begin{equation}\label{eq:single_statement}
		\mathbb{E}J(w_N) \leq \rho^N J(w_0) + 
		(1-\rho^N) \left[ J(w^*) + \frac{\epsilon^2 \text{Tr}(\Sigma_x  G)}{1-\rho} \right] 
		+ \lambda\sum_{\ell=0}^N \rho^{N-\ell} \frac{\sum_{i=1}^2 \mathbb{E}(1-\alpha_\ell^i)}{2}
		\end{equation}
		where the expectation is with respect to the data collected until iteration $N$, and the parameters  are $\Sigma_x = \mathbb{E}xx^T/2$ and $\rho = \max_i (1-\epsilon  \lambda_i(\mathbb{E}xx^T))^2$ and the stepsize $\epsilon>0$ is chosen small enough so that $\rho<1$. 
	\end{thm}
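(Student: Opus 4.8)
The plan is to reduce the statement to a single one-step contraction inequality for $\mathbb{E}[J(w_{k+1}) - J(w^*)\mid w_k]$ and then unroll it over $k$. The starting point is that for linear regression $J$ is exactly quadratic: writing $A = \mathbb{E}xx^T$ and using $Aw^* = \mathbb{E}xy$, one has the identities $J(w) - J(w^*) = \tfrac12 (w-w^*)^T A (w-w^*)$ and, for any direction $g$, $J(w - \epsilon g) = J(w) - \epsilon \nabla J(w)^T g + \tfrac{\epsilon^2}{2} g^T A g$. These make every quantity below computable in closed form, and in particular let me name the per-agent gain $\delta_k^i = J(w_k - \epsilon g_k^i) - J(w_k)$ that drives the scheduler in \eqref{eq:single_scheduling}.

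First I would establish the baseline single-agent estimate. If agent $i$ applied its raw update $w_k - \epsilon g_k^i$, then since $g_k^i$ has conditional mean $\nabla J(w_k) = A(w_k - w^*)$ and covariance $G$, the mean-plus-trace formula for quadratic forms gives $\mathbb{E}[J(w_k - \epsilon g_k^i) - J(w^*)\mid w_k] = \tfrac12 (w_k-w^*)^T (I-\epsilon A)A(I-\epsilon A)(w_k-w^*) + \tfrac{\epsilon^2}{2}\text{Tr}(AG)$. Diagonalizing $A$ and bounding each factor $(1-\epsilon \lambda_i(A))^2 \le \rho$ turns this into $\rho\,(J(w_k)-J(w^*)) + \epsilon^2 \text{Tr}(\Sigma_x G)$, which already produces the $\rho^N$ and noise-floor terms of \eqref{eq:single_statement}. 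The remaining task is to show that the scheduled update in \eqref{eq:dynamics_single_task} never does worse than the average of the two raw updates, up to a controlled communication penalty.

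The core step is a deterministic, realization-wise bound on $J(w_{k+1}) - J(w^*)$. Two facts feed into it. First, in the ``both transmit'' case the second-order expansion gives $J(w_k - \tfrac{\epsilon}{2}(g_k^1+g_k^2)) - J(w_k) = \tfrac12(\delta_k^1+\delta_k^2) - \tfrac{\epsilon^2}{8}(g_k^1-g_k^2)^T A (g_k^1-g_k^2) \le \tfrac12(\delta_k^1+\delta_k^2)$, since $A=\mathbb{E}xx^T$ is positive semidefinite. Second, the scheduling rule means that whenever agent $i$ stays silent we have $\delta_k^i > -\lambda$. Combining these through a four-case check (both, one, or neither transmitting) I expect to obtain the single inequality $J(w_{k+1}) - J(w^*) \le \tfrac12\sum_{i}\big(J(w_k-\epsilon g_k^i) - J(w^*)\big) + \tfrac{\lambda}{2}\sum_i (1-\alpha_k^i)$, valid for every data realization; the key algebraic point is that in the cases where only one or no agent transmits, the ``missing'' progress is exactly the silent agents' gains $-\delta_k^i$, each of which is at most $\lambda$.

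Taking the conditional expectation of this pointwise bound and inserting the baseline estimate for each averaged term yields the one-step recursion $\mathbb{E}[J(w_{k+1})-J(w^*)\mid w_k] \le \rho\,(J(w_k)-J(w^*)) + \epsilon^2\text{Tr}(\Sigma_x G) + \tfrac{\lambda}{2}\sum_i \mathbb{E}[(1-\alpha_k^i)\mid w_k]$; taking total expectations and iterating from $w_0$ produces the geometric factor $(1-\rho^N)/(1-\rho)$ on the noise term and the weighted sum $\sum_\ell \rho^{N-\ell}$ on the communication penalty, matching \eqref{eq:single_statement}. I expect the main obstacle to be precisely the bookkeeping behind the silent-agent bound: the decisions $\alpha_k^i$ are statistically correlated with the gradients $g_k^i$ (a silent agent is by definition one whose realized gain happened to be small), so one cannot naively factor the expectation $\mathbb{E}[\alpha_k^i \delta_k^i]$. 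Establishing the inequality pointwise in the data, \emph{before} any expectation is taken, is what sidesteps this dependence, and checking that it holds with the correct constant $\lambda/2$ across all four transmission patterns is the delicate part of the argument.
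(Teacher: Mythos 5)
Your proposal is correct, but it reaches the recursion by a genuinely different route than the paper at the one delicate step. The paper also starts from the four-case decomposition and the convexity bound for the ``both transmit'' case, but its rearrangement \eqref{eq:more_cases}--\eqref{eq:bound_more_cases} keeps products of the form $\alpha_k^i J(w_k-\epsilon g_k^i)$ alive into the expectation step \eqref{eq:intermediate}; it then needs independence across agents to factor the cross terms, and, crucially, a separate negative-correlation inequality $\mathbb{E}[\alpha_k^i J(w_k - \epsilon g_k^i)\mid w_k] \leq \mathbb{E}[\alpha_k^i\mid w_k]\,\mathbb{E}[J(w_k - \epsilon g_k^i)\mid w_k]$, i.e.\ \eqref{eq:main_comparison}, which is proved in the Appendix by splitting the gradient space into the transmit set $S$ and its complement. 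Your argument eliminates every such product \emph{before} any expectation is taken: the four-case check yields the single realization-wise inequality $J(w_{k+1}) \leq \frac{1}{2}\sum_{i} J(w_k - \epsilon g_k^i) + \frac{\lambda}{2}\sum_i (1-\alpha_k^i)$, which indeed holds in all four patterns (in the one-transmitter case, transmission gives $J(w_k-\epsilon g_k^1) \leq J(w_k)-\lambda$ while silence gives $J(w_k-\epsilon g_k^2) > J(w_k)-\lambda$, so the required $J(w_k-\epsilon g_k^1) \leq J(w_k-\epsilon g_k^2)+\lambda$ holds with slack); after that, conditional expectation is immediate and the contraction estimate $\mathbb{E}[J(w_k-\epsilon g_k^i)\mid w_k] \leq \rho J(w_k) + (1-\rho)J(w^*) + \epsilon^2 \text{Tr}(\Sigma_x G)$ closes the recursion exactly as in the paper. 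What your route buys: no appendix lemma is needed, and in fact no independence between the two agents' gradients is used in the one-step bound (only the per-agent mean and covariance enter, via the contraction estimate), so the argument is more elementary and slightly more general. What the paper's route buys: the correlation inequality \eqref{eq:main_comparison} isolates a structural fact of independent interest -- the scheduling decision is negatively correlated with the post-update cost -- which is reusable beyond this symmetric two-agent setting. One cosmetic remark: your unrolled penalty carries weights $\rho^{N-1-\ell}$ for $\ell = 0,\dots,N-1$, whereas \eqref{eq:single_statement} displays $\rho^{N-\ell}$ for $\ell = 0,\dots,N$; this off-by-one shift is present in the paper's own derivation as well (iterating \eqref{eq:intermedient} produces your form), so your version matches what the proof actually yields.
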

	
	\begin{proof}
		Note that by the dynamics in \eqref{eq:dynamics_single_task} we can write
		\begin{align}\label{eq:four_cases}
		J(w_{k+1}) &= (1- \alpha_k^1)(1- \alpha_k^2) J(w_k) + \alpha_k^1 (1- \alpha_k^2) J(w_k - \epsilon g_k^1)
		\notag \\ &+ (1- \alpha_k^1)\alpha_k^2  J(w_k - \epsilon g_k^2)
		 + \alpha_k^1 \alpha_k^2 J(w_k - \epsilon/2 g_k^1- \epsilon/2 g_k^2), 
		\end{align}
		depending on each of the four cases.
		Then due to the convexity of the problem we have for the last case the bound 
		\begin{equation}
			J(w_k - \epsilon/2 g_k^1- \epsilon/2 g_k^2) \leq 1/2 J(w_k - \epsilon g_k^1) + 1/2 J(w_k - \epsilon g_k^2).
		\end{equation}
		Substituting this bound in \eqref{eq:four_cases} and after a  rearrangement of terms we get
		\begin{align}\label{eq:more_cases}
		J(w_{k+1}) &\leq \frac{1}{2} (1- \alpha_k^2) \Big[ (1- \alpha_k^1)J(w_k) + \alpha_k^1 J(w_k - \epsilon g_k^1) \Big]
		+ \frac{1}{2}\alpha_k^1 J(w_k - \epsilon g_k^1) 
		\notag \\ &+ \frac{1}{2} (1- \alpha_k^1) \Big[ (1- \alpha_k^2)J(w_k) + \alpha_k^2 J(w_k - \epsilon g_k^2) \Big]
		+ \frac{1}{2} \alpha_k^2 J(w_k - \epsilon g_k^1)
		\end{align}
		Then the terms in the brackets can be bounded. Note that due to the choice in \eqref{eq:single_scheduling} the following inequality holds for all times (technically it holds almost surely as all the variables involved are random variables)
		\begin{equation}
		(1-\alpha_k^i) J(w_k) + \alpha_k^i J (w_k - \epsilon g_k^i) \leq \lambda + J(w_k - \epsilon g_k^i).
		\end{equation}
		This can be easily verified by examining the two cases $\alpha_k^i =0$ or $1$ separately.
		Substituting this inequality for agents $i=1,2$ in \eqref{eq:more_cases} we get
		\begin{align}\label{eq:bound_more_cases}
		J(w_{k+1}) &\leq \frac{1}{2} (1- \alpha_k^2) \Big[ \lambda + J(w_k - \epsilon g_k^1) \Big]
		+ \frac{1}{2}\alpha_k^1 J(w_k - \epsilon g_k^1)
		\notag \\ &+ \frac{1}{2} (1- \alpha_k^1) \Big[ \lambda + J(w_k - \epsilon g_k^2) \Big]
		+ \frac{1}{2} \alpha_k^2 J(w_k - \epsilon g_k^2)
		\end{align}
		Taking expectation over the stochastic gradients $g_k^1$ and $g_k^2$, conditioned on the current iterate $w_k$, and using the symmetry of the problem with respect to agents $i=1,2$ we get that 
		\begin{align}\label{eq:intermediate}
		\mathbb{E}[ J(w_{k+1}) \given w_k] &\leq 
		\mathbb{E}[1 - \alpha_k^i\given w_k] \Big[ \lambda + \mathbb{E}[J(w_k - \epsilon g_k^i)\given w_k] \Big]+ \mathbb{E}[\alpha_k^i J(w_k - \epsilon g_k^i)\given w_k]
		\end{align}
		Then we have the following key fact, which \textit{is shown separately in the Appendix, }
		\begin{equation}\label{eq:main_comparison}
			\mathbb{E}[\alpha_k^i J(w_k - \epsilon g_k^i)\given w_k] \leq \mathbb{E}[\alpha_k^i\given w_k] \;  \mathbb{E}[J(w_k - \epsilon g_k^i)\given w_k]
		\end{equation}
		Substituting this bound in \eqref{eq:intermediate}, we get
		\begin{align}
		\mathbb{E}[ J(w_{k+1}) \given w_k] &\leq 
		 \mathbb{E}[1 - \alpha_k^i\given w_k]  \lambda + \mathbb{E}[J(w_k - \epsilon g_k^i)\given w_k] 
		\end{align}
		Then given the fact that the function $J(w)$ is quadratic, and the property of the stochastic gradient that the mean is unbiased $\mathbb{E}g_k^i= \nabla_{w_k} J(w_k)$ with a constant variance, we get that\footnote{We exploit the fact that $(I-\epsilon2\Sigma_x)'\Sigma_x (I-\epsilon2\Sigma_x)\preceq \rho \Sigma_x$ }
		\begin{align}
		\mathbb{E}[ J(w_k - \epsilon g_k)&\given w_k] \leq \rho J(w_k) + \epsilon^2 \text{Tr}(\Sigma_x  G) + (1-\rho)J(w^*)
		\end{align}
		Substituting this we get,
		\begin{align}\label{eq:intermedient}
		\mathbb{E}[ J(w_{k+1}) \given w_k] &\leq 
		\mathbb{E}[1- \alpha_k^i\given w_k]  \lambda + \rho J(w_k) + \epsilon^2 \text{Tr}(\Sigma_x  G) + (1-\rho)J(w^*)
		\end{align}
Taking expectation on both sides with respect to the variable $w_k$, and iterating over time $k=1, \ldots, N$, we get the desired result \eqref{eq:single_statement}.
	\end{proof}

	The result verifies that the update rule converges (in a stochastic sense) because $\rho<1$ as can be confirmed by the appropriate choice of the stepsize $0<\epsilon<2/\lambda_{\max}(\mathbb{E}xx^T)$. Essentially the result follows because the function $J(w)$ can be thought as a Lyapunov function for the stochastic dynamics of the update in \eqref{eq:dynamics_single_task}. A direct consequence of the above result is 
	\begin{equation}
	\limsup_{N \rightarrow \infty} \, \mathbb{E}J(w_N) \leq  J(w^*) + \frac{\lambda + \epsilon^2 \text{Tr}(\Sigma_x  G)}{1-\rho} 
	\end{equation}
	This means that eventually we get  close to the optimal set of weights $w^*$ subject to some overshoots. The latter are due to the stochastic gradient and its covariance $G$, which can be made small in practice by choosing the step size $\epsilon$ to be small -- or by choosing a diminishing stepsize which will be analyzed in future work. Moreover, there is a penalty proportional to the parameter $\lambda$, introduced to save up on communication cost.  It is also possible to choose a diminishing parameter $\lambda$ to eliminate this effect.

	\begin{remark}
		In Theorem~\ref{thm:theorem_single} we assumed for simplicity that the stochastic gradients have bounded covariances that are constant over time. In reality for the problem above the covariance of the stochastic gradient in \eqref{eq:gradient_estimate} will depend on the current iterate $w_k$, but our choice can be justified in two ways. We can either consider these covariances to be uniformly bounded over time by some constant $G$. Or alternative if we consider the case close enough to the equilibrium $w_k \approx w^*$, then it follows that the covariances are indeed constant over time. A more detailed investigation will be explored in a follow up work.
	\end{remark}
	
	{ Furthermore, we can establish the following guarantee about the total communication rate of the proposed approach. 
		
		\begin{thm}[Communication guarantee]
			Consider the same setup as in Theorem~\ref{thm:theorem_single}.
			The total communication rate satisfies 
			\begin{equation}\label{eq:communication_guarantee}
			\limsup_{N \rightarrow \infty} \sum_{k=0}^N  \max\{\alpha_k^1,\alpha_k^2\}
			\leq\frac{ J(w_0) - J(w^*)}{\lambda }
			\end{equation}
			almost surely, 
			with respect to the data collected as iterations $N\rightarrow \infty$.
		\end{thm}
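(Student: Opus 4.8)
The plan is to prove a deterministic (pathwise) sufficient-decrease property: whenever at least one agent transmits, the objective $J$ drops by at least $\lambda$, while if no agent transmits $J$ is unchanged. Summing this telescoping inequality then bounds the total number of transmissions by the total budget of decrease available in $J$, which is finite because $J$ is lower bounded by $J(w^*)$.

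First I would establish the one-step bound
\begin{equation}
J(w_{k+1}) \leq J(w_k) - \lambda \max\{\alpha_k^1,\alpha_k^2\},
\end{equation}
holding surely (for every realization of the data). This follows by examining the four cases of the dynamics \eqref{eq:dynamics_single_task}. If no agent transmits, then $\max\{\alpha_k^1,\alpha_k^2\}=0$ and $w_{k+1}=w_k$, so the inequality holds with equality. If exactly one agent $i$ transmits, then $w_{k+1}=w_k-\epsilon g_k^i$ and the scheduling test \eqref{eq:single_scheduling} guarantees $J(w_k-\epsilon g_k^i)-J(w_k)\leq -\lambda$, matching the bound with $\max=1$. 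If both agents transmit, the convexity bound already used in the proof of Theorem~\ref{thm:theorem_single} gives $J(w_k-\epsilon/2\, g_k^1-\epsilon/2\, g_k^2)\leq \frac{1}{2}J(w_k-\epsilon g_k^1)+\frac{1}{2}J(w_k-\epsilon g_k^2)$, and since $\alpha_k^1=\alpha_k^2=1$ forces both scheduling conditions to hold, each term on the right is at most $J(w_k)-\lambda$, so again $J(w_{k+1})\leq J(w_k)-\lambda$.

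Next I would telescope. Summing the one-step bound over $k=0,\ldots,N$ gives
\begin{equation}
J(w_{N+1}) - J(w_0) \leq -\lambda \sum_{k=0}^N \max\{\alpha_k^1,\alpha_k^2\},
\end{equation}
and rearranging yields $\lambda \sum_{k=0}^N \max\{\alpha_k^1,\alpha_k^2\} \leq J(w_0) - J(w_{N+1})$. Since $w^*$ is the global minimizer we have $J(w_{N+1})\geq J(w^*)$, hence $\sum_{k=0}^N \max\{\alpha_k^1,\alpha_k^2\} \leq (J(w_0)-J(w^*))/\lambda$ for every $N$. As the right-hand side is independent of $N$ and the partial sums are nondecreasing in $N$, letting $N\to\infty$ delivers the claimed bound \eqref{eq:communication_guarantee}. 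Because the entire argument is pathwise (it never passes to expectations), the conclusion holds almost surely.

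I do not expect a genuine analytical obstacle here, since the result is essentially the statement that the scheduling rule acts as a fixed-size sufficient-decrease test against a lower-bounded objective. The only step requiring care is the both-transmit case, where convexity must be invoked so that the averaged update still secures the full $\lambda$ decrease rather than merely a half-decrease; this is precisely the convexity inequality already exploited in Theorem~\ref{thm:theorem_single}, so no new ingredient is needed.
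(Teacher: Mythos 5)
Your proposal is correct and follows essentially the same route as the paper's proof: the same pathwise one-step inequality $\lambda \max\{\alpha_k^1,\alpha_k^2\} + J(w_{k+1}) \leq J(w_k)$ verified by the same four-case analysis (with convexity handling the both-transmit case), followed by the same telescoping sum and the lower bound $J(w_{N+1}) \geq J(w^*)$.
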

	}
	\begin{proof}
		{ 
			Due to the choice in \eqref{eq:single_scheduling} the following inequality holds for all times (technically it holds almost surely as all the variables involved are random variables)
			\begin{equation}\label{eq:basic_inequality_2}
			\lambda \max\{\alpha_k^1,\alpha_k^2\} + 
			J(w_{k+1}) \leq J(w_k).
			\end{equation}
			This can be easily verified by examining the four cases for $\alpha_k^i =0$ or $1$ for $i=1,2$. Specifically, when both $\alpha_k^1 = \alpha_k^2 = 1$ we have that
			\begin{align}
				J(w_{k+1}) &=J(w_k - \epsilon/2 g_k^1- \epsilon/2 g_k^2) \leq 1/2 J(w_k - \epsilon g_k^1) + 1/2 J(w_k - \epsilon g_k^2) \notag\\
				&\leq 1/2 (J(w_k) - \lambda) +1/2 (J(w_k) - \lambda).
			\end{align}
			where the first inequality holds due to convexity and the second inequality holds due to the choice in \eqref{eq:single_scheduling}.
			
			Iterating \eqref{eq:basic_inequality_2} over time $k=0, \ldots, N$, and summing up, we conclude that
			\begin{equation}
			\lambda \sum_{k=0}^N   \max\{\alpha_k^1,\alpha_k^2\} + 
			J(w_{N+1}) \leq J(w_0).
			\end{equation}
			Moreover, since any value of the variable $w_{N+1}$ is in general suboptimal, we have that $J(w_{N+1})\geq J(w^*)$.
From which we get the desired result \eqref{eq:communication_guarantee}. 
		}
	\end{proof}
	
	This result counts communication as long as one agent transmits. It guarantees explicitly that increasing $\lambda$ will decrease the resulting communication in an inversely proportional manner.
	
	\subsection{Practical communication scheme}
	
	Despite the above guarantee, implementing the proposed communication scheme in \eqref{eq:single_scheduling} would be practically challenging because it requires information that is not known. Specifically it would require knowledge of the data distribution in order to compute the actual performance gain. Since the true distribution is unknown, one approach is to \textit{estimate the performance gain from the data}. In particular, since the objective function is quadratic, we can write the performance gain as
	\begin{align}\label{eq:gain_1}
	J(w_{k}- \epsilon g_k) - J(w_k) = &-\epsilon g_k^T \nabla J(w_k) + \frac{1}{2} \epsilon^2 g_k^T \nabla^2 J(w_k) g_k
	\end{align}
	This is a quadratic function of the stochastic gradient $g_k$. Then we can approximate the quantities
	\begin{align}
	&\nabla J(w_k) \approx \frac{1}{N}\sum_{i=1}^N \left( x_i x_i^T w_k - x_i y_i\right) =g_k, 
	\qquad \nabla^2 J(w_k) \approx \frac{1}{N}\sum_{i=1}^N x_i x_i^T
	\end{align}
	where we note that the stochastic gradient direction $g_k$ appears again. Hence, using the expression for the information gain in \eqref{eq:gain_1}, we can approximate the gain as\footnote{Overall at each agent these require $O(Nn)$ operations hence are scalable.}
	%
	\begin{align}\label{eq:approximate_gain_1}
	J(w_{k}- \epsilon g_k) - J(w_k) \approx -\epsilon g_k^T\left[ I - \epsilon \frac{1}{2}  \frac{1}{N}\sum_{i=1}^N x_i x_i^T \right] g_k
	\end{align}
	It is crucial to emphasize that \textit{this is no longer a simple quadratic function} of the data but a more complicated function - we note that the data appear both in the stochastic gradients $g_k$ by \eqref{eq:gradient_estimate} as well as the matrix in the middle. This approximate value of the gain may take again positive or negative values but it induces an approximation error/bias.
	
	As a result, we can implement the communication decision in \eqref{eq:single_scheduling} with the approximation in \eqref{eq:approximate_gain_1}. In this case we no longer have the performance guarantee in Theorem~\ref{thm:theorem_single}. In numerical evaluations however we see that despite the bias this mechanism performs very well.
	

}


\begin{remark}[Other approaches in the literature]\label{rem:comparisons}
A different perspective would be to treat the agents with the largest updates as the most important, and let an agent communicate if the norm of its (stochastic) gradient is large, i.e., 
\begin{equation}\label{eq:scheduling_gradient}
\alpha_k^i = \left\{ \begin{array}{ll} 1 &\text{if } \|g_k^i\|^2 \geq \mu \\
0 &\text{otherwise}\end{array}\right.
\end{equation}
for some scalar parameter $\mu>0$. From our expression on \eqref{eq:approximate_gain_1} we see that for small stepsizes $\epsilon$ the magnitude of the gradient may serve as a proxy for the performance gain.
But in numerical comparisons we show that this scheme typically leads to worse performance.  This may also be the case when the Hessian of the problem is further from an identity matrix. The idea of scheduling based on gradient magnitudes has been proposed in very recent works in federated learning over wireless channels~\citep{amiri2020update, chen2020convergence}, and in the context of sparsification and quantization for high-dimensional gradient updates~\citep{aji2017sparse, sattler2019sparse}. Our findings hence point to a \textit{novel and more communication- efficient approach for gradient updates}.
{Finally, a different perspective is followed by \citep{chen2018lag,chen2018communication}; when agents do not update their gradients at the server, the server just keeps a memory of past received gradients and uses them for gradient descent. A difference compared to the present paper is that here there is an explicit communication-learning tradeoff controlled by the parameter $\lambda$. A more detailed comparison between that approach and the one in the present paper will be considered in future work.}
\end{remark}

\section{Numerical results}\label{sec:numerical}

In this section we make an additional assumption about the data samples, that $x_i$ are i.i.d. Gaussian random variables, while the points $y_i$ are given as $y_i = x_i^T w^* + \eta_i$ where $w^*$ is the true parameter and $\eta_i$ are i.i.d. Gaussian measurement noises. These assumptions are not necessary for the theoretical analysis above.


\begin{figure}
	\includegraphics[width=0.5\columnwidth]{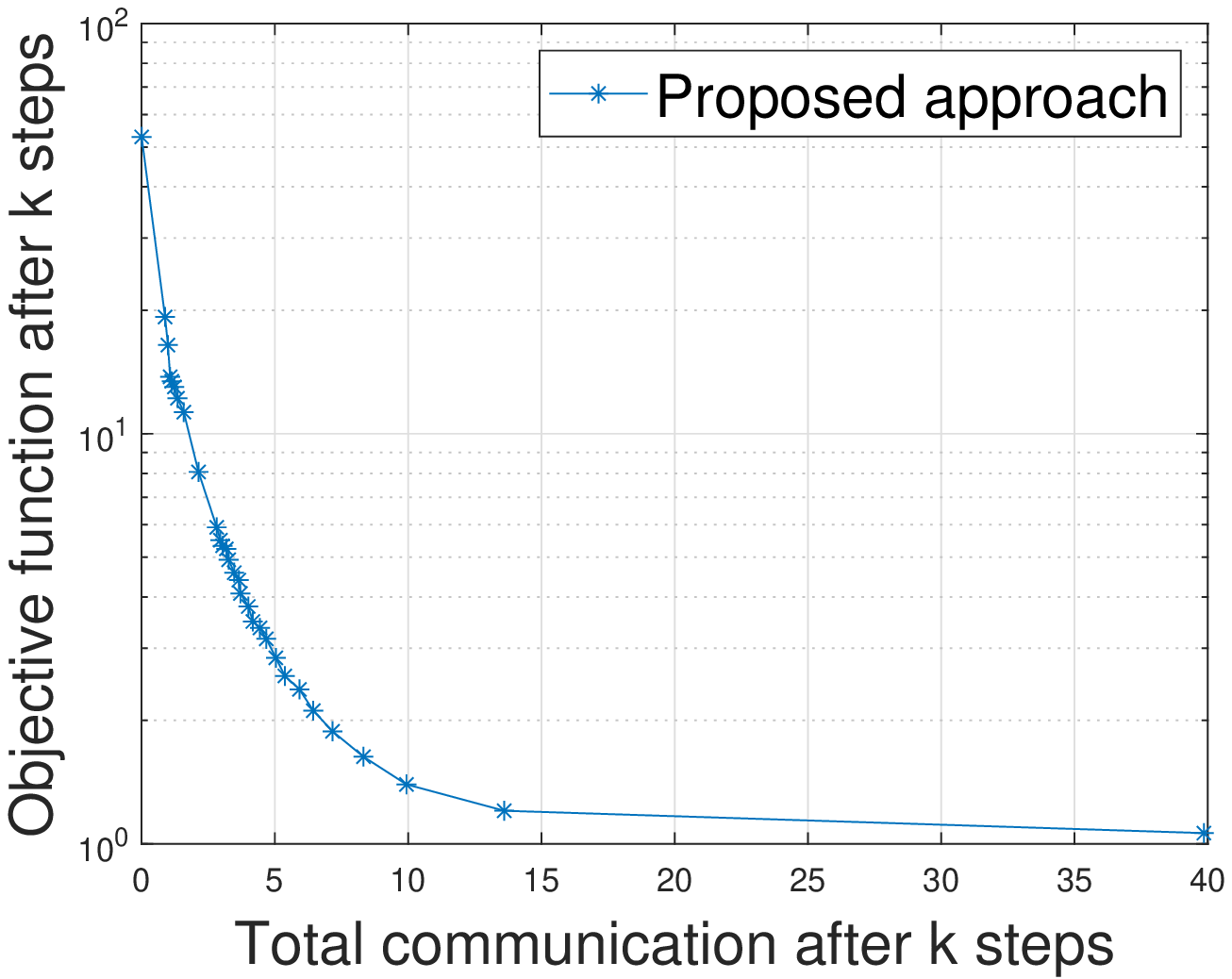}
	\includegraphics[width=0.5\columnwidth]{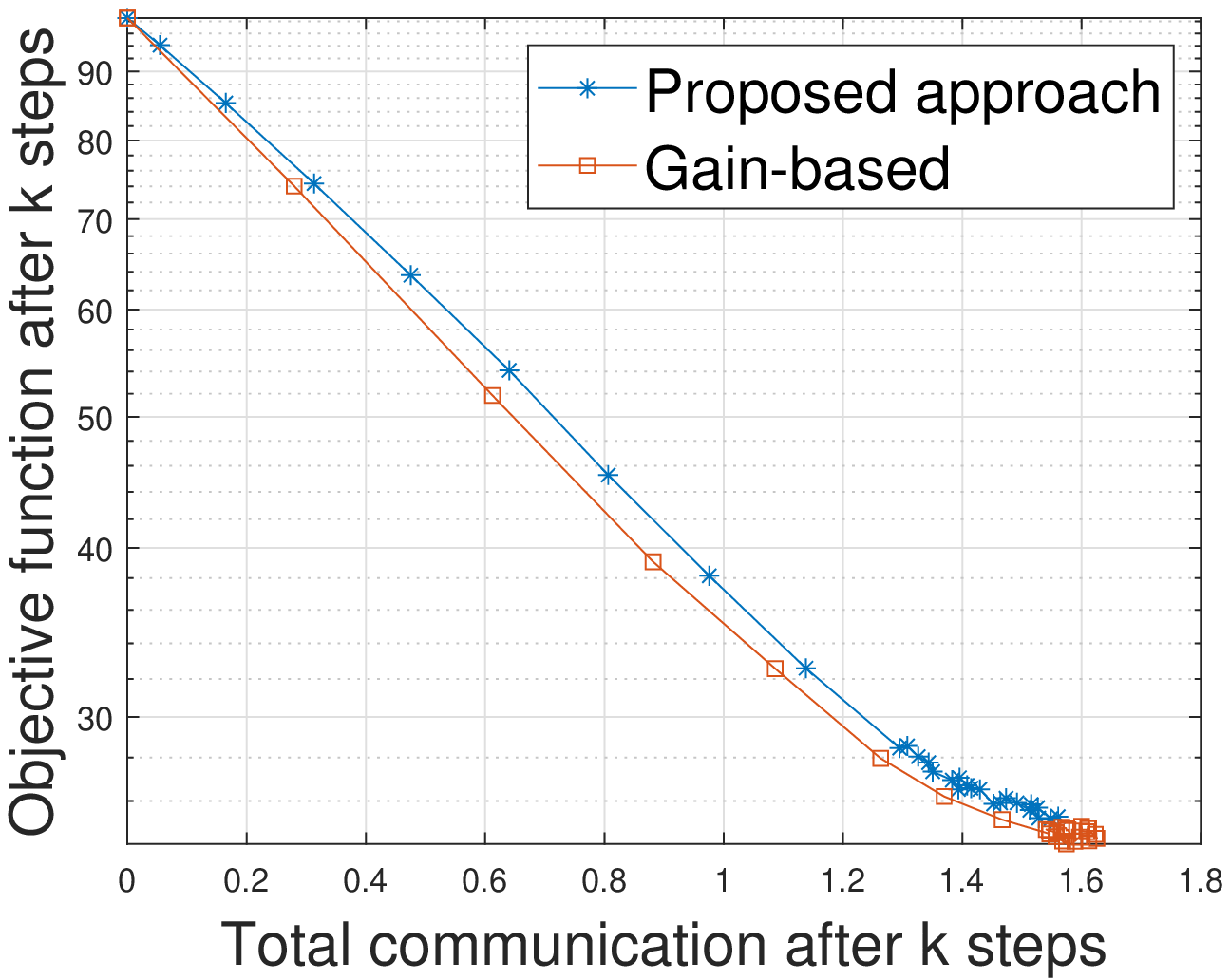}
	\caption{(Left) Evaluation of the tradeoff between communication rate and machine learning performance of the proposed  algorithm in \eqref{eq:single_scheduling}. 
	{(Right) Comparison between our communication approach in \eqref{eq:single_scheduling} requiring the data distribution to compute the gains by \eqref{eq:gain_1} versus estimating the gain by \eqref{eq:approximate_gain_1}. }
}
	\label{fig:tradeoff}
	\vspace{-20pt}
\end{figure}

{We consider the communication algorithm in \eqref{eq:single_scheduling} with the performance gains estimated as in \eqref{eq:approximate_gain_1}. We consider $m=2$ agents. We consider a problem with dimensions $n=2$, with covariances $\mathbb{E} x x^T = \left[ \begin{array}{cc} 3 &0\\ 0 &1 \end{array}\right]$ (which affects the Hessian of the problem), the initial weights are $w_0=0$, and the true weights equal to $w^* = \left[ \begin{array}{c} 3 \\ 5 \end{array}\right]$. 
First for stepsize $\epsilon = 0.1$ and $N=5$ data points available at each iteration and at each agent (cf.\eqref{eq:data}), we simulate algorithm \eqref{eq:single_scheduling} for varying values of the parameter $\lambda$. In Fig.~\ref{fig:tradeoff}(Left) we plot the observed mean learning performance after the $K=10$ iterations ($J(w_{K})$) versus the total communication rate ($\sum_{k=0}^{K} \sum_{i=1}^2\alpha_k^i$). We observe that the proposed communication approach indeed allows us to tradeoff communication rate with machine learning performance.
	
	
}


We would like to investigate how much bias is introduced by our practical scheme that is based on estimating the performance gain at each agent based on the currently available data. Hence we compare \eqref{eq:single_scheduling} when using the performance gains computed by \eqref{eq:gain_1} that requires knowledge of the data distributions, with the completely data-based scheme in \eqref{eq:approximate_gain_1}. 
For the same linear regression setup as before, for $N=5$ samples per agent, stepsize $\epsilon = 0.2$ and for a single time step,  for varying value of the parameter $\lambda$ the comparison is shown in Fig.~\ref{fig:tradeoff}(Right).
In our numerical evaluations, we surprisingly do not observe a significant difference due to the estimation procedure. This was observed across different instances, reinforcing the usefulness of our scheme. 




We finally compare our communication scheme \eqref{eq:single_scheduling} based on estimating the performance gain across tasks in \eqref{eq:approximate_gain_1} with the simple strategy based on the magnitude of the gradients at each agent in \eqref{eq:scheduling_gradient}. 
{We consider  a randomly chosen $w^*$ of dimensions $n=10$ and a covariance matrix $\mathbb{E} x x^T$ diagonal with randomly chosen coefficients.}
We assume $N=20$ data points are available at each iteration per agent. We consider $K=10$ steps in the algorithm. 
For stepsize $\epsilon = 0.2$ the comparisons are shown in Fig.~\ref{fig:architecture}(Right) 
for varying values of the parameters $\lambda$ and $\mu$ in each of the schemes.
%
%
We observe that our approach performs significantly better than the gradient-based one. The improvements get typically more significant as the setpsize increases. Our conclusion is that \textit{the magnitude of the gradient is not a reliable measure for the informativeness of the data}. Our approach which is based on the more complex estimate of performance gain provides a more reliable and communication-efficient approach.

\section{Concluding remarks}

In this paper we examine the problem of solving machine learning tasks over a network. We consider the problem of selecting which updates to communicate to lower the communication rate. To exploit the informativeness of the data we examine the notion of performance gain and we illustrate numerically how this can be approximated from the data without further model knowledge. The approach is contrasted to other related works in the area of communication-efficient learning. 
Ongoing work explores the use of the approach in more complex networks of learning agents, as well as other machine learning tasks beyond linear regression.

\appendix
\section{Technical Results}

\textbf{Proof of \eqref{eq:main_comparison}}. 
Let us consider the distribution of the gradient $g$ denoted by $F(g)$. Then we can rewrite \eqref{eq:main_comparison} as 
\begin{equation}
\int \alpha(g)J(w-\epsilon g) dF(g) \leq \int \alpha(g)dF(g) \int J(w-\epsilon g) dF(g)
\end{equation}
However, by definition of the communication rule \eqref{eq:single_scheduling} we have that $\alpha(g)=1$ only when $J(w-\epsilon g) \leq J(w)-\lambda$ and zero otherwise. Let us define this set of values $S= \{g\in \reals^n: J(w-\epsilon g) \leq J(w)-\lambda\}$. Then \eqref{eq:main_comparison} is equivalent to
\begin{equation}
\int_S J(w-\epsilon g) dF(g) \leq \int_S dF(g) \left[ \int_S J(w-\epsilon g) dF(g) + \int_{S^c} J(w-\epsilon g) dF(g)\right]
\end{equation}
%
%
%
which is 
equivalent to
\begin{align}\label{eq:equivalent_inequality}
\int_{S^c} dF(g) \int_S J(w-\epsilon g) dF(g) \leq \int_S dF(g) \int_{S^c} J(w-\epsilon g) dF(g)
\end{align}
We can bound the left hand side because we can bound $J(w-\epsilon g)$ point wise on the set $S$ as
\begin{align}\label{eq:bound_1}
\int_{S^c} dF(g) \int_S J(w-\epsilon g) dF(g) \leq \int_{S^c} dF(g) \int_S dF(g) \; [J(w)-\lambda]
\end{align}
Further we can bound the right hand side of \eqref{eq:equivalent_inequality} as
\begin{align}\label{eq:bound_2}
\int_S dF(g) \int_{S^c} J(w-\epsilon g) dF(g) \geq \int_S dF(g) \int_{S^c} dF(g) \; [J(w)-\lambda]
\end{align}
Combining \eqref{eq:bound_1} and \eqref{eq:bound_2} we verify \eqref{eq:equivalent_inequality} and conclude the proof.

\pagebreak
\bibliography{federated_learning}

\end{document}